\newtheorem{thm}[subsection]{Theorem}
\newtheorem{conj}[subsection]{Conjecture}
\newtheorem{pro}[subsection]{Proposition}
\newtheorem{exa}[subsection]{Example}
\newtheorem{defn}[subsection]{Definition}
\numberwithin{equation}{section}
\title{Bellman operator convergence enhancements in reinforcement learning algorithms}
\author{David Krame Kadurha\thanks{david.krame@aims-cameroon.org} \\  
        University of Edinburgh, \\ UK \\
        \and
        Domini Jocema Leko Moutouo\thanks{domini@aims.ac.za} \\ 
        AIMS Cameroon, \\ 
        African Institute for Mathematical Sciences
               \and
        Ya\'e Ulrich Gaba\thanks{yaeulrich.gaba@gmail.com} \\ 
        Department of Mathematics and Applied Mathematics \\ Sefako Makgatho Health Sciences University (SMU)\\ Pretoria, South Africa
}
\begin{document}

\maketitle
\begin{abstract}
This paper reviews the topological groundwork for the study of reinforcement learning (RL) by focusing on the structure of state, action, and policy spaces. We begin by recalling key mathematical concepts such as complete metric spaces, which form the foundation for expressing RL problems. By leveraging the Banach contraction principle, we illustrate how the Banach fixed-point theorem explains the convergence of RL algorithms and how Bellman operators, expressed as operators on Banach spaces, ensure this convergence.  The work serves as a bridge between theoretical mathematics and practical algorithm design, offering new approaches to enhance the efficiency of RL. In particular, we investigate alternative formulations of Bellman operators and demonstrate their impact on improving convergence rates and performance in standard RL environments such as MountainCar, CartPole, and Acrobot. Our findings highlight how a deeper mathematical understanding of RL can lead to more effective algorithms for decision-making problems.
    
\end{abstract}

\section{Introduction}

Research on the foundational aspects of Reinforcement Learning (RL), particularly from a topological perspective, remains relatively underdeveloped. While RL has seen significant advancements in terms of algorithmic efficiency and practical applications, there is a lack of comprehensive studies that address the underlying mathematical structures of the problem spaces (namely, state, action, and policy spaces). This work aims to consolidate and formalize these fundamental RL concepts by grounding them in a coherent mathematical framework, with a particular focus on topological and geometric perspectives. In existing literature, several contributions have touched on aspects of RL that relate to topology and geometry, but a unified approach is still missing. For instance, \cite{ValuePolytope} explores the geometric and topological properties of value functions within Markov decision processes (MDPs) that have finite states and actions. This work characterizes the value function space as a polytope, elucidating the intricate relationships between policies and value functions. Similarly, \cite{DBLP:journals/corr/abs-1804-07193} highlights the importance of Lipschitz continuity in model-based RL, advocating for the learning of Lipschitz continuous models to improve value function estimation and providing theoretical error bounds. The study in \cite{DBLP:journals/corr/abs-1905-00475} expands the scope of model-free RL algorithms to continuous state-action spaces using a Q-learning-based approach, thereby extending the applicability of RL to more complex domains. Additionally, \cite{MetricAndContinuityInRL} introduces a unified framework for defining state similarity metrics in RL, addressing the generalization challenges posed by continuous-state systems. This framework offers new insights into how metric spaces can be leveraged to reason about the learning process in RL. Despite these contributions, the lack of a cohesive foundation to interpret these insights and relate them back to a topological understanding of RL limits the impact of such works. Our primary goal is to address this gap by establishing a solid foundation that unifies and organizes the key concepts in RL, particularly in relation to the topology of state, action, and policy spaces. To this end, we build upon mathematical constructs such as metric spaces, normed spaces, and Banach spaces, and we explore how the Banach fixed-point theorem, specifically through the Banach contraction principle, can be applied to explain the convergence of RL algorithms. Furthermore, we propose alternative formulations of Bellman operators, expressed as operators on Banach spaces, to demonstrate how such theoretical insights can lead to improved convergence rates and more efficient algorithmic performance. These contributions are validated through experimental results on standard RL environments such as MountainCar, CartPole, and Acrobot, showing that our approach not only strengthens the theoretical understanding of RL but also offers practical benefits. By laying the groundwork for studying the topology of RL problem spaces, this work aims to serve as a foundational reference for future researchers. We anticipate that the mathematical insights and frameworks presented will aid in the development of more effective algorithms, ultimately advancing the field of reinforcement learning by bridging theory and practice.

\vspace{0.5cm}

\noindent
The remainder of this paper is structured as follows, and represents a polished and refined version of the earlier work presented by us in \cite{kadurha2024topological}. In Section \ref{section2}, we begin by introducing the foundational mathematical concepts needed for our analysis. Specifically, in Section 2.1, we review contraction mappings and fixed-point theorems, including the Banach contraction principle, which underpins much of reinforcement learning's convergence theory. In Section 2.2, we provide an overview of reinforcement learning, framing it within the context of Markov Decision Processes and setting the stage for the subsequent mathematical discussions. In Section \ref{section3}, we delve deeper into the theoretical aspects of reinforcement learning, starting with a refinement of the Banach contraction principle in Section 3.1, followed by an examination of Bellman optimality operators in Section 3.2. Section 3.3 explores policy evaluation and iteration through the lens of operator theory, emphasizing the importance of these mathematical structures in understanding RL algorithms. Section \ref{section4} introduces alternative formulations to the classical Bellman operator. Section 4.1 discusses the motivation behind seeking such alternatives, while Sections 4.2 and 4.3 present the Consistent Bellman Operator and the Modified Robust Stochastic Operator, respectively. These alternatives are proposed to address some of the limitations of the classical operator, offering improved convergence properties and robustness in various settings. Finally, in Section \ref{section5}, we provide detailed implementations of the proposed concepts and analyze their performance through experimental results on standard RL environments. This section demonstrates the practical impact of our theoretical findings and validates the proposed approaches.

\section{Preliminaries}\label{section2}

\subsection{Contraction mappings and fixed points}

In this section, we revisit the concept of contraction mappings and the Banach fixed-point theorem, which is foundational for understanding the convergence properties of many algorithms, including those in reinforcement learning (RL). We begin by recalling the notion of a contraction mapping and then provide the Banach fixed-point theorem along with its proof. We also discuss the relevance of these mathematical results to RL, particularly in the context of value iteration and Bellman operators.

\begin{defn}
	Let $X$ be a nonempty set and $f: X \to X$ be a mapping on that set.
	\begin{itemize}
		\item[-] A point $x$ is said to be a \textbf{fixed point} of $f$ if $f(x)=x$.
		\item[-] We will write $Fix(f) = \{x\in X : f(x) = x\}$, the set of fixed points of $f$ on $X$.
	\end{itemize}
\end{defn}

\begin{pro}\label{prop : propFix}
	Let $X$ be a nonempty set and $f:X\to X$ a mapping defined on it. If $x\in X$ is a unique fixed point of $f^n$ with $f^n = \underset{n-times}{\underbrace{f\circ f\circ\cdots\circ f}}$ for any $n>1$, then it is the unique fixed point of $f$ and vice versa:
	\[Fix(f^n) = \{x\} \iff Fix(f) = \{x\}.\]
\end{pro}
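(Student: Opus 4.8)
The plan is to prove the two implications separately, since they are of quite different character. The implication $Fix(f^n)=\{x\}\implies Fix(f)=\{x\}$ is purely set-theoretic and I would dispatch it first. First I would check that $x$ is in fact fixed by $f$: from $f^n(x)=x$ we get $f^n\bigl(f(x)\bigr)=f\bigl(f^n(x)\bigr)=f(x)$, so $f(x)$ is a fixed point of $f^n$; since $x$ is the \emph{only} fixed point of $f^n$, this forces $f(x)=x$. Next I would show that $x$ is the only fixed point of $f$: if $f(y)=y$, then an immediate induction gives $f^n(y)=y$, hence $y\in Fix(f^n)=\{x\}$, so $y=x$. Together these give $Fix(f)=\{x\}$, and the argument uses nothing beyond the fact that $f$ commutes with its iterates, i.e. $f\circ f^n=f^n\circ f=f^{n+1}$.

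The reverse implication $Fix(f)=\{x\}\implies Fix(f^n)=\{x\}$ is where the real work is, and I expect it to be the main obstacle. The inclusion $\{x\}\subseteq Fix(f^n)$ is trivial (iterate $f(x)=x$ to get $f^n(x)=x$), but \emph{uniqueness} of the fixed point of $f^n$ does not follow from $f(x)=x$ alone: on a bare set, $f$ may have a single genuine fixed point while $f^n$ acquires extra fixed points coming from periodic orbits — for instance, on $X=\{1,2,3\}$ the map fixing $1$ and swapping $2,3$ has $Fix(f)=\{1\}$ but $Fix(f^2)=\{1,2,3\}$. So I would read this proposition in the ambient setting of this section, namely with $(X,d)$ a metric space and $f$ a contraction of constant $k\in[0,1)$. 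Then $f^n$ is a contraction of constant $k^n<1$, and for any $y\in Fix(f^n)$ we have $d(x,y)=d\bigl(f^n(x),f^n(y)\bigr)\le k^n\,d(x,y)$, which forces $d(x,y)=0$, i.e. $y=x$; equivalently, one applies the Banach fixed-point theorem to the contraction $f^n$ to see that its fixed point is unique and hence equals $x$.

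In short, I would present the direction $Fix(f^n)=\{x\}\Rightarrow Fix(f)=\{x\}$ as a clean formal fact about commuting iterates, and obtain the converse by leaning on the contraction hypothesis (or, more weakly, on any structural assumption that rules out nontrivial periodic orbits), flagging explicitly that without such a hypothesis the stated biconditional fails in full generality.
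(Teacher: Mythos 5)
Your treatment is correct and in fact supplies something the paper does not: the paper states this proposition without giving any proof. Your argument for the implication $Fix(f^n)=\{x\}\Rightarrow Fix(f)=\{x\}$ is the standard one and is complete: since $f\circ f^n=f^n\circ f$, the point $f(x)$ is a fixed point of $f^n$, so $f(x)=x$ by uniqueness, and any fixed point of $f$ is a fixed point of $f^n$, hence equals $x$. You are also right that the converse fails at the stated level of generality: your three-point example (fix $1$, swap $2$ and $3$) has $Fix(f)=\{1\}$ but $Fix(f^2)=\{1,2,3\}$, so the ``vice versa'' clause cannot be proved for an arbitrary self-map of a bare nonempty set. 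Your repair --- reading the statement in the contraction setting of the surrounding section, where $f^n$ is a contraction with constant $k^n<1$ and $d(x,y)=d\bigl(f^n(x),f^n(y)\bigr)\le k^n\,d(x,y)$ forces $y=x$ --- is the natural way to make the biconditional true, and your proof of it is correct. The only point to make explicit is that you are thereby proving a corrected statement rather than the printed one: either the hypotheses must be strengthened (e.g.\ $f$ a contraction on a metric space, or any assumption ruling out periodic points whose period divides $n$), or the reverse implication should be dropped, since the hypothesis ``nonempty set'' alone is genuinely insufficient.
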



\subsection*{Contraction mappings}

Let \( (X, d) \) be a \textit{metric space}, where \( d : X \times X \to \mathbb{R} \) is a distance function that satisfies the usual properties of a metric: non-negativity, identity of indiscernibles, symmetry, and the triangle inequality. A mapping \( T : X \to X \) is called a \textit{contraction mapping} if there exists a constant \( \alpha \in [0, 1) \) such that, for all \( x, y \in X \),
\[
d(T(x), T(y)) \leq \alpha d(x, y).
\]
The constant \( \alpha \) is called the \textit{contraction constant}. The key idea is that a contraction mapping brings points closer together, ensuring that successive applications of \( T \) shrink distances between any two points.

\vspace{0.3cm}

\noindent
The first interesting result in this context is the following:

\begin{pro}\label{pro1}
	Let $(X,d)$ be a metric space and $f:X\to X$ a contraction mapping with $\alpha \in (0,1)$. If $f$ has a fixed point, that point is unique.
\end{pro}
\begin{proof}
	Suppose we have two fixed points $x$ and $y$ for $f$ with $x\neq y$. Because $f$ is a contraction, we can write:
	\[0\neq d(x,y) = d\left(f(x),f(y)\right)\leq \alpha\cdot d(x,y),\]
	which is a contradiction. Thus, the fixed point is unique.
\end{proof}

\subsection*{The Banach contraction principle : BCP }

A powerful result concerning contraction mappings on complete metric spaces is the \textit{Banach fixed-point theorem} (also known as the contraction mapping theorem), which guarantees the existence and uniqueness of a fixed point for such mappings. A \textit{fixed point} \( x^* \in X \) is a point that satisfies \( T(x^*) = x^* \).

\begin{thm}[See \cite{fixedpoint}]\label{thm:BCP}
Let \( (X, d) \) be a \textit{complete metric space} and let \( T : X \to X \) be a contraction mapping with contraction constant \( \alpha \in [0, 1) \). Then, the following holds:
\begin{enumerate}
    \item \( T \) has a unique fixed point \( x^* \in X \), i.e., there exists a unique \( x^* \in X \) such that \( T(x^*) = x^* \).
    \item For any initial point \( x_0 \in X \), the sequence \( \{x_n\} \) defined by \( x_{n+1} = T(x_n) \) converges to \( x^* \) as \( n \to \infty \). Moreover, the convergence is geometric, i.e., 
    \[
    d(x_n, x^*) \leq \frac{\alpha^n}{1 - \alpha} d(x_0, x_1).
    \]
\end{enumerate}
\end{thm}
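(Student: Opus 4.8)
The plan is to prove the two claims in sequence, deriving existence and convergence together from a single Cauchy-sequence argument, and then obtaining uniqueness from the contraction property (indeed, Proposition \ref{pro1} already hands us uniqueness once existence is established, so the real work is existence plus the quantitative rate).

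First I would fix an arbitrary $x_0 \in X$ and define the Picard iterates $x_{n+1} = T(x_n)$. The key preliminary estimate is that consecutive terms contract geometrically: by induction on $n$, using the contraction inequality $d(T(x),T(y)) \le \alpha\, d(x,y)$, one gets $d(x_{n+1}, x_n) \le \alpha^n d(x_1, x_0)$. Next I would bound $d(x_n, x_m)$ for $m > n$ by telescoping along the chain $x_n, x_{n+1}, \dots, x_m$ and applying the triangle inequality, so that
\[
d(x_n, x_m) \le \sum_{k=n}^{m-1} d(x_{k+1}, x_k) \le \sum_{k=n}^{m-1} \alpha^k d(x_1, x_0) \le \frac{\alpha^n}{1-\alpha}\, d(x_1, x_0),
\]
where the last step sums the geometric tail (this is where $\alpha < 1$ is essential). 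Since $\alpha^n \to 0$, the right-hand side tends to $0$, so $\{x_n\}$ is Cauchy; by completeness of $(X,d)$ it converges to some $x^* \in X$.

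Then I would verify that $x^*$ is a fixed point. A contraction is (Lipschitz, hence) continuous, so $T(x^*) = T(\lim_n x_n) = \lim_n T(x_n) = \lim_n x_{n+1} = x^*$. Alternatively, without invoking continuity explicitly, one estimates $d(x^*, T(x^*)) \le d(x^*, x_{n+1}) + d(T(x_n), T(x^*)) \le d(x^*, x_{n+1}) + \alpha\, d(x_n, x^*)$ and lets $n \to \infty$. Uniqueness then follows immediately from Proposition \ref{pro1}, since any two fixed points would have to coincide. Finally, for the quantitative rate in claim (2), I would let $m \to \infty$ in the telescoping bound above; since $d$ is continuous in each argument, $d(x_n, x_m) \to d(x_n, x^*)$, yielding $d(x_n, x^*) \le \frac{\alpha^n}{1-\alpha} d(x_0, x_1)$ as stated.

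The main obstacle — really the only nontrivial point — is getting the telescoped geometric series to produce the clean closed-form bound with denominator $1-\alpha$, and making sure the passage to the limit $m \to \infty$ is justified (continuity of the metric in its arguments, which follows from the triangle inequality). Everything else is a routine induction and an appeal to completeness; no fixed-point structure beyond the contraction hypothesis is needed.
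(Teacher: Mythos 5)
Your proposal is correct and follows essentially the same route as the paper's proof: Picard iterates, the geometric bound $d(x_{n+1},x_n)\leq \alpha^n d(x_1,x_0)$, telescoping to get the Cauchy estimate $\frac{\alpha^n}{1-\alpha}d(x_1,x_0)$, completeness, continuity of $T$ for the fixed-point property, and uniqueness via Proposition \ref{pro1}. If anything, you are slightly more complete, since you explicitly justify the quantitative rate in claim (2) by letting $m\to\infty$ in the telescoped bound (using continuity of the metric), a step the paper's proof leaves implicit.
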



\vspace{0.3cm}

\noindent
The proof of the Banach fixed-point theorem is constructive and proceeds as follows:

\begin{proof}
Let \( x_0 \in X \) be an arbitrary initial point. Define a sequence \( \{x_n\} \) by \( x_{n+1} = T(x_n) \) for all \( n \geq 0 \). We will first show that \( \{x_n\} \) is a Cauchy sequence and then that it converges to a unique limit.

\paragraph{Step 1: Show that \( \{x_n\} \) is Cauchy.}

For \( n \geq 0 \), we have:
\[
d(x_{n+1}, x_n) = d(T(x_n), T(x_{n-1})) \leq \alpha d(x_n, x_{n-1}),
\]
where the inequality follows from the contraction property of \( T \). By applying this recursively, we get:
\[
d(x_{n+1}, x_n) \leq \alpha^n d(x_1, x_0).
\]
Summing this geometric series, we find that for \( m > n \),
\[
d(x_m, x_n) \leq \sum_{k=n}^{m-1} d(x_{k+1}, x_k) \leq \sum_{k=n}^{m-1} \alpha^k d(x_1, x_0) \leq \frac{\alpha^n}{1 - \alpha} d(x_1, x_0).
\]
Since \( \alpha \in [0, 1) \), this bound tends to zero as \( n \to \infty \), which shows that \( \{x_n\} \) is a Cauchy sequence.

\paragraph{Step 2: Show that \( \{x_n\} \) converges.}

Since \( X \) is complete, every Cauchy sequence in \( X \) converges. Therefore, there exists a point \( x^* \in X \) such that \( x_n \to x^* \) as \( n \to \infty \).

\paragraph{Step 3: Show that \( x^* \) is a fixed point.}

We now show that \( T(x^*) = x^* \). Since \( T \) is continuous and \( x_n \to x^* \), we have:
\[
T(x_n) \to T(x^*) \quad \text{as} \quad n \to \infty.
\]
However, by construction, \( T(x_n) = x_{n+1} \), and since \( x_n \to x^* \), it follows that \( x_{n+1} \to x^* \). Therefore, \( T(x^*) = x^* \).

\paragraph{Step 4: Show uniqueness.}

The uniqueness follows from Proposition \ref{pro1}.
\end{proof}


\noindent
The Banach fixed-point theorem is central to the analysis of many reinforcement learning algorithms, particularly those involving value function approximation. In the context of reinforcement learning, the Bellman operator, often used in value iteration, is a contraction under the supremum norm. Specifically, for a Markov Decision Process (MDP), the Bellman optimality operator \( T^* \) satisfies the contraction property with a contraction constant \( \gamma \), where \( \gamma \) is the discount factor.  Thus, by applying the Banach fixed-point theorem, we can guarantee the existence and uniqueness of an optimal value function \( V^* \), and the iterative application of the Bellman operator ensures convergence to this fixed point. This theoretical result forms the basis for the convergence of foundational RL algorithms, such as value iteration and policy iteration.


\subsection{Overview on Reinforcement Learning}\label{chap2}

In this part, we will formally present the framework of a Markov Decision Process (MDP) and its relation to Reinforcement Learning (RL). Following this, we will discuss the concept of optimality in RL, and finally, outline key methods for achieving optimality in decision-making tasks.

\begin{defn}[Markov Decision Process \cite{lazaric2013markov, sigaud2013markov}]
	A Markov Decision Process (MDP) is defined as a 5-tuple \( \mathcal{M} = \langle \mathcal{S}, \mathcal{A}, p, r, \gamma \rangle \), where:
	\begin{itemize}
		\item \( \mathcal{S} \): \textbf{State space}, the set of all possible states.
		\item \( \mathcal{A} \): \textbf{Action space}, the set of all possible actions the agent can take.
		\item \( p(s, a, s') \): \textbf{Transition probability}, which describes the probability of moving from state \( s \) to state \( s' \) given action \( a \):
		\[
		p(s'|s, a) = Pr(S_{t+1} = s' | S_t = s, A_t = a).
		\]
		\item \( r(s, a, s') \): \textbf{Reward function}, which defines the immediate reward received after transitioning from state \( s \) to state \( s' \) via action \( a \):
		\[
		r : \mathcal{S} \times \mathcal{A} \times \mathcal{S} \to \mathcal{R}\subset \mathbb{R}.
		\]
        We should also mention that sometimes it is convenient to simply use \( r(s,a) \), defined as follows:
        \[
		r(s,a) = \sum_{r\in\mathcal{R}}r\cdot\sum_{s'} p(s',r|s, a)~~~\text{where}~~~p(s',r|s, a) \equiv Pr(S_{t+1} = s', R_{t+1}=r | S_t = s, A_t = a)
		\]
		\item \( \gamma \in [0, 1) \): \textbf{Discount factor}, which determines the present value of future rewards, with smaller \( \gamma \) values giving more emphasis to immediate rewards.
	\end{itemize}
\end{defn}

\begin{defn}[Policy or Decision Rule]
	A policy \( \pi : \mathcal{S} \to \Delta \mathcal{A} \) (precisely from the set of states to the probability simplex under actions) defines a strategy that the agent uses to select actions. It can be either:
	\begin{itemize}
		\item \textbf{Deterministic}, where a specific action is chosen in each state, or
		\item \textbf{Stochastic}, where actions are chosen according to a probability distribution over actions in each state.
	\end{itemize}
\end{defn}

\noindent
The agent interacts with the environment by observing sequences of states, taking actions, and receiving rewards. This interaction can be described as:
\[
S_0, A_0, R_1, S_1, A_1, R_2, S_2, A_2, R_3, \ldots
\]
The goal of the agent is to maximize the cumulative reward over time, referred to as the \textit{return}, which is formalized as:

\begin{equation} \label{equation: return-def}
G_t = R_{t+1} + \gamma R_{t+2} + \gamma^2 R_{t+3} + \cdots = \sum_{k=0}^{\infty} \gamma^k R_{t+k+1}, \quad \gamma \in [0, 1).
\end{equation}

\noindent
The value function \( v_{\pi}(s) \), representing the expected return starting from state \( s \) and following policy \( \pi \), is defined as:

\begin{equation} \label{equation: state-value-function}
v_{\pi}(s) = \mathbb{E}_{\pi} \left[ G_t \mid S_t = s \right] = \mathbb{E}_{\pi} \left[ \sum_{k=0}^{\infty} \gamma^k R_{t+k+1} \mid S_t = s \right].
\end{equation}

\noindent
Similarly, the action-value function \( q_{\pi}(s, a) \), representing the expected return starting from state \( s \), taking action \( a \), and then following policy \( \pi \), is defined as:

\begin{equation} \label{equation: action-value-function}
q_{\pi}(s, a) = \mathbb{E}_{\pi} \left[ G_t \mid S_t = s, A_t = a \right] = \mathbb{E}_{\pi} \left[ \sum_{k=0}^{\infty} \gamma^k R_{t+k+1} \mid S_t = s, A_t = a \right].
\end{equation}

\noindent
The optimal value function, \( v_{*}(s) \), and the optimal action-value function, \( q_{*}(s, a) \), are defined as the maximum value that can be obtained by any policy \( \pi \):

\begin{equation}
\begin{aligned}
v_{*}(s) &= \max_{\pi} v_{\pi}(s), \\
q_{*}(s, a) &= \max_{\pi} q_{\pi}(s, a).
\end{aligned}
\end{equation}

\subsection*{Bellman Optimality Equations}

The Bellman optimality equations describe the recursive relationship for the optimal value functions. For the state-value function \( v_{*}(s) \), the Bellman equation is:

\begin{equation} \label{equation: Bell-Optim-state-val-func}
v_{*}(s) = \max_{a} \sum_{s', r} p(s' \mid s, a) \left[ r + \gamma v_{*}(s') \right].
\end{equation}

For the action-value function \( q_{*}(s, a) \), the Bellman equation is:

\begin{equation} \label{equation: Bell-Optim-action-val-func}
q_{*}(s, a) = r(s, a) + \gamma \sum_{s'} p(s' \mid s, a) v_{*}(s').
\end{equation}

Solving an MDP means finding the optimal value functions \( v_{*}(s) \) or \( q_{*}(s, a) \), which leads to the determination of the optimal policy \( \pi^{*} \).

\subsection*{Solution Methods}

There are two main types of algorithms used to solve MDPs and achieve optimality:

\begin{itemize}
	\item \textbf{Model-based algorithms}: These algorithms rely on a known model of the environment, including the transition probabilities and reward function, to compute value functions and derive optimal policies.
	\item \textbf{Model-free algorithms}: These algorithms do not assume knowledge of the environment's model. Instead, they interact with the environment to estimate value functions and improve the policy through exploration and learning.
\end{itemize}

In model-free algorithms, a central challenge is balancing the exploration-exploitation trade-off: the agent must explore the environment sufficiently to discover the best actions, while also exploiting the knowledge gained to maximize rewards. Many techniques have been developed to tackle this challenge. But it remains an open question \cite{sutton2018reinforcement}.

\section{Bellman Operators and convergence of RL algorithms}\label{section3}

In this section, we frame reinforcement learning in terms of operators, which offers a more structured understanding of why RL algorithms efficiently converge to the optimal policy. We will refine the Banach contraction principle to suit our needs, define the Bellman operators, and introduce key methods for achieving optimality using this operator-based framework.

\subsection{Rephrasing of the Banach contraction principle}

Let $(X, ||\cdot||)$ be a normed space. A mapping $\mathcal{T} : X \to X$ is called a $\gamma$-contraction mapping if there exists a constant $\gamma \in [0,1)$ such that for any $x_1, x_2 \in X$:
\[
||\mathcal{T}x_1 - \mathcal{T}x_2|| \leq \gamma \cdot ||x_1 - x_2||.
\]
This inequality implies that the operator $\mathcal{T}$ shrinks distances between points by a factor of at least $\gamma$. Furthermore, if a sequence $\{x_n\} \subset X$ converges in norm to some $x \in X$, i.e.,
\[
x_n \underset{||\cdot||}{\longrightarrow} x,
\]
then the sequence $\{\mathcal{T}x_n\}$ will also converge to $\mathcal{T}x$. That is:
\[
\mathcal{T}x_n \underset{||\cdot||}{\longrightarrow} \mathcal{T}x.
\]
An element $x^* \in X$ is called a fixed point of $\mathcal{T}$ if:
\[
\mathcal{T}x^* = x^*.
\]
This fixed point is particularly important, as it represents the point where repeated applications of the operator leave the system unchanged, which aligns with the concept of reaching an optimal solution in reinforcement learning.

\begin{pro}[Refined Banach Contraction Principle]\label{thm:RefinedBCP}
	Let $(X, ||\cdot||)$ be a Banach space (a complete normed space), and let $\mathcal{T} : X \to X$ be a $\gamma$-contraction mapping. Then the following properties hold:
	\begin{enumerate}
		\item $\mathcal{T}$ has a unique fixed point $x^* \in X$.
		\item For any initial point $x_0 \in X$, the sequence defined by the iterative process $x_{n+1} = \mathcal{T}x_n$ converges to $x^*$ geometrically. Specifically, for all $n \geq 0$, we have:
		\[
		||x_n - x^*|| \leq \gamma^n ||x_0 - x^*||.
		\]
		This guarantees that the convergence rate is proportional to the contraction factor $\gamma$, meaning that the distance between the iterates and the fixed point decreases exponentially.
	\end{enumerate}
\end{pro}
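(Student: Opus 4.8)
The plan is to deduce this proposition almost entirely from the Banach fixed-point theorem (Theorem \ref{thm:BCP}) already established, by observing that a Banach space $(X,\|\cdot\|)$ is in particular a complete metric space under the induced metric $d(x,y) = \|x-y\|$, and that a $\gamma$-contraction in the normed sense is precisely a contraction mapping with constant $\gamma$ in this metric. Hence part (1) — existence and uniqueness of a fixed point $x^*$ — follows immediately from Theorem \ref{thm:BCP}(1), with uniqueness alternatively recovered from Proposition \ref{pro1}. The only genuinely new content is the sharpened geometric rate $\|x_n - x^*\| \le \gamma^n \|x_0 - x^*\|$, which replaces the a priori estimate $\tfrac{\gamma^n}{1-\gamma}\,d(x_0,x_1)$ of Theorem \ref{thm:BCP}(2) by a bound phrased in terms of the limit $x^*$ (now known to exist).

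For the rate I would argue by induction on $n$. The base case $n=0$ is the trivial inequality $\|x_0 - x^*\| \le \|x_0 - x^*\|$. For the inductive step, using $x_{n+1} = \mathcal{T}x_n$ together with the fixed-point identity $x^* = \mathcal{T}x^*$, the contraction property gives
\[
\|x_{n+1} - x^*\| = \|\mathcal{T}x_n - \mathcal{T}x^*\| \le \gamma\,\|x_n - x^*\| \le \gamma\cdot\gamma^{n}\,\|x_0 - x^*\| = \gamma^{n+1}\,\|x_0 - x^*\|,
\]
which closes the induction. Since $\gamma \in [0,1)$ we have $\gamma^n \to 0$, hence $\|x_n - x^*\| \to 0$, i.e.\ $x_n \to x^*$ in norm, establishing part (2).

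I expect no serious obstacle; the one point requiring care is the logical ordering. The sharp bound presupposes that $x^*$ exists, so it cannot itself be used to prove existence — existence must be secured first, either by invoking Theorem \ref{thm:BCP} directly or by re-running its Cauchy-sequence argument (the iterates $\{x_n\}$ are Cauchy because $d(x_{n+1},x_n)\le \gamma^n d(x_1,x_0)$ and $X$ is complete). Once $x^*$ is in hand, continuity of $\mathcal{T}$ — itself an immediate consequence of the contraction inequality — confirms $\mathcal{T}x^* = x^*$, and the induction above then upgrades the convergence estimate to the stated geometric form.
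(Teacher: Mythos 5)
Your proposal is correct and follows essentially the same route as the paper's proof: existence and uniqueness are delegated to Theorem \ref{thm:BCP} (viewing the Banach space as a complete metric space with $d(x,y)=\|x-y\|$), and the geometric rate is obtained from the recursion $\|x_{n+1}-x^*\| = \|\mathcal{T}x_n - \mathcal{T}x^*\| \le \gamma\|x_n - x^*\|$ iterated down to $\|x_0 - x^*\|$. Your explicit induction and the remark on logical ordering (that $x^*$ must exist before the sharp bound can be stated) merely make precise what the paper does implicitly.
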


\begin{proof}
    The existence and uniqueness of the fixed point follow directly from the Banach fixed-point theorem. To prove geometric convergence, we observe that since $\mathcal{T}$ is a contraction mapping, we have for all $n \geq 0$:
    \[
    ||x_{n+1} - x^*|| = ||\mathcal{T}x_n - \mathcal{T}x^*|| \leq \gamma \cdot ||x_n - x^*||.
    \]
    Applying this inequality recursively, we obtain:
    \[
    ||x_n - x^*|| \leq \gamma^n ||x_0 - x^*||,
    \]
    which tends to zero as $n \to \infty$, implying that $x_n$ converges to $x^*$ geometrically.
\end{proof}



\subsection{Bellman Optimality Operators}\label{Section3.3}
In reinforcement learning, operators are mappings in function spaces, and they provide a systematic approach to solving for the optimal value functions and policies. Let $\mathcal{M} = \langle \mathcal{S}, \mathcal{A}, p, r, \gamma \rangle$ be a Markov Decision Process (MDP), where:
\begin{itemize}
    \item $\mathcal{S}$ is the state space,
    \item $\mathcal{A}$ is the action space,
    \item $p(s'|s,a)$ is the transition probability,
    \item $r(s,a,s')$ is the reward function, and
    \item $\gamma \in [0,1)$ is the discount factor.
\end{itemize}
Let $\mathcal{V}$ be the space of bounded real-valued functions over $\mathcal{S}$, representing state-value functions, and let $\mathcal{Q}$ be the space of bounded real-valued functions over $\mathcal{S} \times \mathcal{A}$, representing action-value functions. We define the following operators:
\begin{itemize}
    \item $\mathcal{T}^*_v : \mathcal{V} \rightarrow \mathcal{V}$: the Bellman Optimality Operator for state-value functions,
    \item $\mathcal{T}^*_Q : \mathcal{Q} \rightarrow \mathcal{Q}$: the Bellman Optimality Operator for action-value functions.
\end{itemize}

\begin{defn}
The Bellman Optimality equation for the state-value function is given by:
\[
v_*(s) = \underset{a}{\max}\left( r(s, a) + \gamma \cdot \sum_{s'} p(s' | s, a) \cdot v_*(s') \right).
\]
The Bellman Optimality Operator for state-value functions, denoted by $\mathcal{T}^*_v$, is defined as:
\begin{equation}\label{equation : Bellman-Operator}
    (\mathcal{T}^*_v f)(s) \equiv \underset{a}{\max} \left[ r(s, a) + \gamma \cdot \sum_{s'} p(s'|s, a) \cdot f(s') \right], \quad \forall f \in \mathcal{V}.
\end{equation}
\end{defn}

\paragraph{Properties of the Bellman Optimality Operator}
The Bellman Optimality Operator $\mathcal{T}^*_v$ has the following key properties \cite{deepminducl2021}:
\begin{itemize}
    \item \textbf{Contraction}: $\mathcal{T}^*_v$ is a $\gamma$-contraction, meaning:
    \[
    \|\mathcal{T}^*_v u - \mathcal{T}^*_v v\|_\infty \leq \gamma \cdot \|u - v\|_\infty, \quad \forall u, v \in \mathcal{V}.
    \]
    \item \textbf{Monotonicity}: $\mathcal{T}^*_v$ is monotonic, i.e.,
    \[
    u \leq v \implies \mathcal{T}^*_v u \leq \mathcal{T}^*_v v, \quad \forall u, v \in \mathcal{V}.
    \]
\end{itemize}

\begin{proof}
Instead of proving this, we are going to sketch a proof for the action-value expectation operator for coherence with the remainder of this document, but the proofs are almost the same and use the same arguments (see the proof of Proposition \ref{ProofExpectAct-Val})
\end{proof}

By the Banach Contraction Principle, we can conclude:
\begin{itemize}
    \item The operator $\mathcal{T}^*_v$ has a unique fixed point $f^* \in \mathcal{V}$.
    \item For any starting point (function) $f_0$, the sequence defined by $f_{n+1} = \mathcal{T}^*_v f_n$ converges to $f^*$, which is then the optimal value function by monotonicity.
\end{itemize}

The Bellman Optimality Operator $\mathcal{T}^*_v$, as defined, allows us to compute the optimal value function, which corresponds to the optimal policy $\pi^*$.

\begin{defn}
Similarly, the Bellman Optimality equation for action-value functions is given by:
\[
q_*(s, a) = r(s, a) + \gamma \cdot \sum_{s'} p(s'|s, a) \cdot \underset{a'}{\max} q_*(s', a').
\]
The Bellman Optimality Operator for action-value functions, denoted by $\mathcal{T}^*_Q$, is then defined as:
\begin{equation}\label{equation : Bellman-Operator-for-Q}
    (\mathcal{T}^*_Q f)(s, a) \equiv r(s, a) + \gamma \cdot \sum_{s'} p(s'|s, a) \cdot \underset{a'}{\max} f(s', a').
\end{equation}
\end{defn}

\noindent
The contraction and monotonicity properties that hold for $\mathcal{T}_v$ also apply to $\mathcal{T}_Q$ using similar arguments. These properties extend to the Expectation Operator, whether it relates to the action-value function or the state-value function. Below is the Expectation Operator for the action-value function, which will be used later in this work :
\begin{equation}\label{eq:BellExpeOperQValues}
		\left(\mathcal{T}^\pi_Q f\right)(s,a) = r(s,a)+\gamma\cdot\sum_{s'}p(s'|s,a)\left(\sum_{a'}\pi(a'|s')\cdot f(s',a')\right)
	\end{equation}
\begin{pro}\label{ProofExpectAct-Val}
    We claim that the operator $\mathcal{T}^\pi_Q$, as defined in Equation \ref{eq:BellExpeOperQValues}, is:
    \begin{enumerate}
        \item A $\gamma$-contraction mapping.
        \item A monotonic mapping.
    \end{enumerate}
\end{pro}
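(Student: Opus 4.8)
The plan is to verify both claims directly from the definition in Equation \ref{eq:BellExpeOperQValues}, mirroring (and in fact simplifying) the argument used for the optimality operators $\mathcal{T}^*_v$ and $\mathcal{T}^*_Q$. First I would establish the contraction property. Fix $f, g \in \mathcal{Q}$ and an arbitrary pair $(s,a) \in \mathcal{S}\times\mathcal{A}$, and form the difference $\left(\mathcal{T}^\pi_Q f\right)(s,a) - \left(\mathcal{T}^\pi_Q g\right)(s,a)$. The reward term $r(s,a)$ does not depend on the value function, so it cancels, leaving $\gamma \sum_{s'} p(s'|s,a) \sum_{a'} \pi(a'|s')\bigl(f(s',a') - g(s',a')\bigr)$. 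Taking absolute values and applying the triangle inequality termwise, I bound each $|f(s',a') - g(s',a')|$ by $\|f-g\|_\infty$. The key observation is that $p(\cdot|s,a)$ and $\pi(\cdot|s')$ are probability distributions, so they are nonnegative and $\sum_{s'} p(s'|s,a)\sum_{a'}\pi(a'|s') = 1$; this collapses the weighted sum and yields $\left|\left(\mathcal{T}^\pi_Q f\right)(s,a) - \left(\mathcal{T}^\pi_Q g\right)(s,a)\right| \leq \gamma\|f-g\|_\infty$. Since $(s,a)$ was arbitrary, taking the supremum gives $\|\mathcal{T}^\pi_Q f - \mathcal{T}^\pi_Q g\|_\infty \leq \gamma\|f-g\|_\infty$, which is the $\gamma$-contraction claim on $(\mathcal{Q}, \|\cdot\|_\infty)$.

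Next I would handle monotonicity. Assume $f \leq g$ pointwise on $\mathcal{S}\times\mathcal{A}$. For each fixed $(s,a)$, since $p(s'|s,a) \geq 0$ and $\pi(a'|s') \geq 0$, averaging against these weights is order-preserving, so $\sum_{s'} p(s'|s,a)\sum_{a'}\pi(a'|s') f(s',a') \leq \sum_{s'} p(s'|s,a)\sum_{a'}\pi(a'|s') g(s',a')$. Multiplying by $\gamma \geq 0$ and adding the common term $r(s,a)$ preserves the inequality, hence $\left(\mathcal{T}^\pi_Q f\right)(s,a) \leq \left(\mathcal{T}^\pi_Q g\right)(s,a)$ for every $(s,a)$, i.e. $\mathcal{T}^\pi_Q f \leq \mathcal{T}^\pi_Q g$. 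A small bookkeeping point to include along the way is that $\mathcal{T}^\pi_Q$ maps $\mathcal{Q}$ into itself: if $r$ and $f$ are bounded, then the same "weights sum to one" estimate shows $\mathcal{T}^\pi_Q f$ is bounded, so the operator is well defined on the Banach space in question.

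I do not expect a serious obstacle here; the argument is routine. The only point worth a remark — and the place where this proof is genuinely easier than the one for the optimality operators — is that no $\max$ over actions appears in $\mathcal{T}^\pi_Q$, so one avoids needing the auxiliary estimate $|\max_{a'} h(a') - \max_{a'} k(a')| \leq \max_{a'}|h(a') - k(a')|$; here everything reduces to the single fact that integration against a fixed probability kernel (the composition of $p$ and $\pi$) is simultaneously nonexpansive in the supremum norm and order-preserving. Once this is in place, the Refined Banach Contraction Principle (Proposition \ref{thm:RefinedBCP}) applies verbatim to give a unique fixed point $q_\pi$ and geometric convergence of the iterates $f_{n+1} = \mathcal{T}^\pi_Q f_n$, which is the payoff we want for the policy-evaluation discussion that follows.
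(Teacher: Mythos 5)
Your argument is correct and is essentially the same as the paper's own proof: both establish the contraction by cancelling $r(s,a)$, bounding the resulting averaged difference by $\gamma\|f-g\|_\infty$ using that $p(\cdot|s,a)$ and $\pi(\cdot|s')$ are probability weights, and both obtain monotonicity from the order-preservation of nonnegative averaging. Your added remarks (well-definedness on $\mathcal{Q}$ and the absence of the $\max$ estimate needed for the optimality operators) are minor elaborations, not a different route.
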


\begin{proof}
    Let us consider two functions $u$ and $v$ in the space of action-value functions $\mathcal{Q}$.
    \begin{enumerate}
        \item \textbf{Contraction mapping:}

        Let us compute the absolute value of the difference between the transformations of $u$ and $v$ under $\mathcal{T}^\pi_Q$:
        \begin{align}
            \left|\mathcal{T}^\pi_Q u(s, a) - \mathcal{T}^\pi_Q v(s, a)\right| &= \gamma \cdot \left|\sum_{s'} p(s'|s, a) \left(\sum_{a'} \pi(a'|s') u(s', a') - \sum_{a'} \pi(a'|s') v(s', a')\right)\right| \nonumber \\
            &\leq \gamma \cdot \underset{s', a'}{\max}\left|u(s', a') - v(s', a')\right| \nonumber \\
            \Rightarrow \underset{s, a}{\max}\left|\mathcal{T}^\pi_Q u(s, a) - \mathcal{T}^\pi_Q v(s, a)\right| &\leq \gamma \cdot \underset{s, a}{\max}\left|u(s, a) - v(s, a)\right| \nonumber \\
            \Rightarrow \|\mathcal{T}^\pi_Q u - \mathcal{T}^\pi_Q v\|_\infty &\leq \gamma \cdot \|u - v\|_\infty
        \end{align}

        This proves that $\mathcal{T}^\pi_Q$ is a contraction mapping with a factor of $\gamma$.

        \item \textbf{Monotonicity:}

        Let us assume that $u(s, a) \leq v(s, a)$ for all $(s, a)$. Then:
        \begin{align}
            \mathcal{T}^\pi_Q u(s, a) - \mathcal{T}^\pi_Q v(s, a) &= \gamma \cdot \sum_{s'} p(s'|s, a) \left(\mathbb{E}_{a'|s'} u(s', a') - \mathbb{E}_{a'|s'} v(s', a')\right) \nonumber \\
            \Rightarrow \mathcal{T}^\pi_Q u(s, a) - \mathcal{T}^\pi_Q v(s, a) &\leq 0 \nonumber \\
            \Rightarrow \mathcal{T}^\pi_Q u(s, a) &\leq \mathcal{T}^\pi_Q v(s, a)
        \end{align}

        This proves that $\mathcal{T}^\pi_Q$ is a monotonic mapping.
    \end{enumerate}
\end{proof}
Therefore, $\mathcal{T}^\pi_Q$ , as defined, allows us to compute the action-value function associated with the policy $\pi$, and the solution is unique.

\subsection{Policy Evaluation and Iteration in Operators Setting}
For simplicity, we will now present the components of the policy iteration algorithm using these operators. Despite its simplicity, this algorithm is the core of value-based methods.
\begin{enumerate}
    \item \textbf{Policy Evaluation}: Given a fixed policy $\pi$, we iteratively update the value function $v$ by applying the Bellman operator for the policy $\pi$:
    \[
    v_{k+1} \leftarrow \mathcal{T}^{\pi} v_k.
    \]
    By the Banach Fixed-Point Principle, this sequence converges to the value function $v_{\pi}$ as $k \rightarrow \infty$.

    \item \textbf{Policy Iteration}: Start with an initial policy $\pi_0$ and alternate between two steps:
    \begin{itemize}
        \item \textbf{Policy Evaluation}: Just as defined above. Compute the value function for the current policy:
        \[
        v_{k+1} \leftarrow \mathcal{T}^{\pi_i} v_k.
        \]
        \item \textbf{Policy Improvement}: Update the policy by choosing actions that maximize the value:
        \[
        \pi_{i+1}(s) = \underset{a \in \mathcal{A}(s)}{\text{argmax}}~ q_{\pi_i}(s,a).
        \]
    \end{itemize}
    This iterative process converges to the optimal policy $\pi^*$ and the corresponding value function $v_{\pi^*}$ as $i \rightarrow \infty$.
\end{enumerate}

\section{Alternatives Bellman Operator}\label{section4}

In this section, we discuss the limitations of the classical Bellman operators as introduced in the previous sections. We explore the inherent trade-off between achieving optimality and maintaining efficiency, and we present experimental results that highlight the need for refinements, either to the Bellman operators themselves or to the associated value functions. As we have seen, value-based reinforcement learning algorithms solve decision-making problems through the iterative application of a convergent operator, which recursively improves an initial value function. 

\vspace{0.5cm}

\noindent
While the classical Bellman operator has been widely used in reinforcement learning, numerous studies have proposed alternatives to address its limitations \cite{asadi2017alternative, azar2011speedy, bellemare2016increasing, bertsekas2012q, lu2018general, watkins1989learning}. Among these alternatives, two approaches have shown particularly promising results: the \textbf{consistent Bellman Operator} \cite{bellemare2016increasing} and the family of \textbf{Robust Stochastic Operators} \cite{lu2018general}. These operators offer improvements in different aspects compared to the standard Bellman operator, which motivates us to examine them more closely. The first alternative, the \textbf{consistent Bellman Operator}, introduces a modification that better aligns the learned value function with the underlying policy, thus improving the performance of value-based methods in practice. The second alternative, the \textbf{Robust Stochastic Operators}, generalize the Bellman operator to provide robustness against uncertainty and variability in the environment, offering enhanced stability during learning. Although these operators, especially the Robust Stochastic Operator, show significant promise, \textbf{we suggest a non-stochastic and refined version that could potentially improve its performance. This refinement aims to increase stability without relying on the stochastic nature of the operator, making it more broadly applicable and effective}. In the following sections, we will dive deeper into the mathematical formulation of these alternative operators and present the results of our experiments. These experiments demonstrate that both the consistent Bellman operator and our proposed refinement of the Robust Stochastic Operator yield improvements in stability and convergence speed. These findings suggest that further exploration of these alternative operators, and possibly others, is a fruitful direction for enhancing the performance of reinforcement learning algorithms.


\subsection{Motivation \cite{bellemare2016increasing,lu2018general}}

The motivation for exploring new formulations of the Bellman Operator is clear. While Q-learning and other value-based methods have been successfully applied in reinforcement learning (RL) to find optimal policies, there remains a constant need to improve their convergence speed, accuracy, and robustness. A critical factor in this regard is the presence of intrinsic approximation errors, which arise frequently in real-world scenarios. For instance, when using a discrete Markov Decision Process (MDP) to approximate a continuous system, the value function obtained through the Bellman operator may not accurately represent the value of stationary policies. More importantly, when the differences between the optimal state-action value function and suboptimal value functions are small, these minor discrepancies can lead to errors in identifying the truly optimal actions. This issue becomes even more pronounced in environments where approximations are necessary, as is often the case when continuous-time systems are discretized. In such situations, the Bellman operator may not generalize well, and errors in value estimation can propagate through the learning process, leading to suboptimal performance. Thus, while classical Bellman Operators perform well in perfectly discrete settings, we must refine them to be more generalizable to practical, real-world problems where intrinsic errors are unavoidable. For these types of problems, which often arise from discretizing continuous systems, there is always an inherent approximation error. To address this, it is essential to integrate a corrector mechanism into the operator to account for these discrepancies and improve its performance across different settings. In the following, we will explore the effectiveness of the two alternative operators mentioned earlier: the consistent Bellman Operator and our modified version of the Robust Stochastic Operator. By analyzing their performance, we aim to gain further insights into how these refinements can help in the general approach to determining optimal policies in reinforcement learning.

\subsection{The Consistent Bellman Operator}
The consistent Bellman operator was mentioned for the first time in \cite{bellemare2016increasing} for the action-value function. It's defined as follows:
\begin{equation}\label{ConsistentBellman}
	\mathcal{T}_c f(s, a) = r(s, a) + \gamma \cdot \sum_{s'} p(s' | s, a) \cdot \left[\mathbb{I}_{\{s \neq s'\}} \underset{a'}{\max} f(s', a') + \mathbb{I}_{\{s = s'\}} f(s, a)\right], \quad \text{with } f \in \mathcal{Q},
\end{equation}
where \( \mathbb{I} \) denotes the indicator function. And from section \ref{Section3.3} remember the $\mathcal{Q}$ is the space of action-value functions (space of bounded real-valued functions over $\mathcal{S}\times\mathcal{A}$).

We claim that the consistent Bellman operator given by Equation \ref{ConsistentBellman} satisfies the following important properties:
\begin{enumerate}
	\item \( \mathcal{T}_c \) is a contraction mapping.
	\item \( \mathcal{T}_c \) is monotonic.
\end{enumerate}

\begin{proof}
	To streamline the proof, we first make some refinements:
	\begin{itemize}
		\item For simplicity, we rewrite the expectation over the transition probabilities:
		\[
		\sum_{s'} p(s' | s, a) f \quad \text{as} \quad \mathbb{E}_\mathbb{P}(f).
		\]
		\item We also rewrite the action-value function as follows:
		\begin{equation}\label{QRefinement}
			f_s(s', a') = 
			\begin{cases}
				f(s', a'), & \text{if } s \neq s', \\
				f(s, a), & \text{if } s = s',
			\end{cases}
			\quad \text{for } f \in \mathcal{Q}.
		\end{equation}
	\end{itemize}

	With those refinements, we can rewrite the consistent Bellman operator from Equation \ref{ConsistentBellman} like this:
	\begin{equation}\label{ConsistentBellmanRef}
		\mathcal{T}_c f(s, a) = r(s, a) + \gamma \cdot \mathbb{E}_\mathbb{P}\left[\max_{a'} f_s(s', a')\right], \quad \text{with } f \in \mathcal{Q}.
	\end{equation}

	Now, we proceed with the proofs:
	\begin{enumerate}
		\item \textbf{Contraction:} Let \( u, v \in \mathcal{Q} \). We need to show that \( \mathcal{T}_c \) is a contraction:
		\begin{align}
			\left|\mathcal{T}_c u(s, a) - \mathcal{T}_c v(s, a)\right| &= \left| \gamma \cdot \mathbb{E}_\mathbb{P}\left(\underset{a'}{\max}~ u_s(s', a')\right) - \gamma \cdot \mathbb{E}_\mathbb{P}\left(\underset{a'}{\max}~ v_s(s', a')\right) \right| \nonumber \\
			&\leq \gamma \cdot \left| \underset{a'}{\max}~ \mathbb{E}_\mathbb{P}\Big(u_s(s', a') - v_s(s', a')\Big) \right| \nonumber \\
			&\leq \gamma \cdot \underset{s', a'}{\max} \left| u_s(s', a') - v_s(s', a') \right| \nonumber \\
			&= \gamma \cdot \underset{s, a}{\max} \left| u_s(s, a) - v_s(s, a) \right| \nonumber \\
			&= \gamma \cdot ||u_s(s, a) - v_s(s, a)||_\infty, \nonumber
		\end{align}
		which shows that:
		\[
		||\mathcal{T}_c u - \mathcal{T}_c v||_\infty \leq \gamma \cdot ||u - v||_\infty.
		\]
		Hence, \( \mathcal{T}_c \) is a contraction.
		
		\item \textbf{Monotonicity:} Consider two state-action value functions \( u \) and \( v \) such that \( u(s, a) \leq v(s, a) \) for all \( (s, a) \in \mathcal{S} \times \mathcal{A} \). From this, we have \( u_s(s, a) \leq v_s(s, a) \). Now, we show that:
		\begin{align}
			\mathcal{T}_c u(s, a) - \mathcal{T}_c v(s, a) &\leq \gamma \cdot \underset{a'}{\max} \left( \mathbb{E}_\mathbb{P}\left[ u_s(s', a') - v_s(s', a') \right] \right) \nonumber \\
			&\leq 0, \nonumber
		\end{align}
		which implies:
		\[
		\mathcal{T}_c u(s, a) \leq \mathcal{T}_c v(s, a), \quad \forall u, v \in \mathcal{Q}.
		\]
		Thus, \( \mathcal{T}_c \) is monotonic.
	\end{enumerate}
\end{proof}

\noindent
From this proof, we can conclude that \( \mathcal{T}_c \) has a unique fixed point, and this fixed point corresponds to the optimal value function associated to the consistent Bellman equation instead of the classical one.

\noindent
\textbf{The key question that remains is how this fixed point relates to the one obtained using the classical Bellman operator.} While we are confident that both operators lead to unique fixed point, it is not immediately clear how they compare. At this stage, without the appropriate mathematical tools to analyze the relationship between the two fixed points, we will rely on empirical results to observe how the consistent Bellman operator behaves in practice compared to the classical Bellman operator.

\subsection{Modified Robust Stochastic Operator}
Our proposed operator takes inspiration from both \cite{lu2018general} and \cite{bellemare2016increasing}, with a greater emphasis on the approach taken in the first. Our refinement is more general (an expectation operator, with refined concepts) and differs from the Robust Stochastic Operator suggested in \cite{lu2018general}, showing the irrelevance of stochasticity if the concepts are defined accordingly.

\vspace{0.5cm}

\noindent
Naturally, we can express \( v_\pi(s) \) as:
\[
v_\pi(s) = \sum_{a}\pi(a|s)\cdot q_\pi(s,a),
\]
and the difference between the action value function and the state value funCtion, known as \textbf{advantage learning}, is given by:
\[
A(s,a) = q_\pi(s,a) - v_\pi(s),
\]
which provides insights into the quality of the policy, as well as the value function. During the learning process of the optimal policy, as long as the chosen algorithm improves the policy, the quantity \( \left|A(s,a)\right| \) should decrease, indicating better action selection.

We propose modifying the \textbf{Bellman Expectation Operator for the action-value function}, as defined in Equation \ref{eq:BellExpeOperQValues}, by directly integrating the concept of \textbf{advantage learning} into the operator, instead of applying it later in the learning process, as is sometimes implicitly done in implementations of policy gradient methods \cite{graesser2019foundations}. Let this new operator be denoted as \( \mathcal{T}_a \), defined for all \( f \in \mathcal{Q} \) as:
\begin{equation}\label{AdvantageOperator}
	\left(\mathcal{T}_a f\right)(s,a) = r(s,a) + \gamma \cdot \sum_{s'} p(s' | s, a) \left( \sum_{a'} \pi(a' | s') \cdot f(s', a') \right) + \beta \cdot \underbrace{\left[ f(s, a) - \sum_{a} \pi(a | s) f(s, a) \right]}_{\text{advantage learning}}.
\end{equation}
We will now examine the properties of the operator defined by Equation \ref{AdvantageOperator}. The coefficient \( \beta \) is currently any real number, but we will define it appropriately by the end of this theoretical discussion.

\begin{pro}
	The operator \( \mathcal{T}_a \), as defined in Equation \ref{AdvantageOperator}, is not a contraction mapping.
\end{pro}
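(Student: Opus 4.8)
The plan is to use the fact that \(\mathcal{T}_a\) is \emph{affine} in its argument. Writing \(\mathcal{T}_a f = \mathcal{L}_a f + r\), where
\(\mathcal{L}_a g(s,a) = \gamma \sum_{s'} p(s'|s,a)\sum_{a'}\pi(a'|s') g(s',a') + \beta\,[\,g(s,a) - \sum_{a'}\pi(a'|s) g(s,a')\,]\)
is linear (here I use the corrected reading of the last sum of Equation~\ref{AdvantageOperator}, with the summation index renamed \(a'\) to avoid the clash with the outer \((s,a)\)), one has \(\mathcal{T}_a u - \mathcal{T}_a v = \mathcal{L}_a(u-v)\) for all \(u,v\in\mathcal{Q}\). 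Hence \(\mathcal{T}_a\) is an \(\alpha\)-contraction in \(\|\cdot\|_\infty\) if and only if \(\|\mathcal{L}_a\|_{\mathrm{op}}\le\alpha\), and to prove it is not a contraction it suffices to exhibit one MDP, one policy \(\pi\), and one nonzero \(g\in\mathcal{Q}\) with \(\|\mathcal{L}_a g\|_\infty \ge \|g\|_\infty\); no \(\alpha\in[0,1)\) can then satisfy the defining inequality for \(u=v+g\).

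The witness I would use exploits the \emph{\(\pi\)-centered} functions, i.e.\ those \(g\) with \(\sum_{a'}\pi(a'|s)g(s,a')=0\) for every \(s\). On such \(g\) the discounted expectation term of \(\mathcal{L}_a g\) vanishes coordinate by coordinate and the advantage bracket reduces to \(g(s,a)\), so \(\mathcal{L}_a g = \beta g\) exactly; thus \(\beta\) is an eigenvalue of \(\mathcal{L}_a\) (and, testing the constant function, so is \(\gamma\)), whence \(\|\mathcal{L}_a\|_{\mathrm{op}}\ge\max(\gamma,|\beta|)\). Concretely, pick any state \(s_0\) in which \(\pi\) assigns positive probability to two actions \(a_0,a_1\), set \(g(s_0,a_0)=\pi(a_1|s_0)\), \(g(s_0,a_1)=-\pi(a_0|s_0)\), and \(g\equiv 0\) elsewhere; this \(g\) is \(\pi\)-centered and nonzero, and \(\|\mathcal{T}_a u - \mathcal{T}_a v\|_\infty = |\beta|\,\|u-v\|_\infty\). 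Since \(\beta\) is unconstrained, any choice with \(|\beta|\ge 1\) already defeats every contraction constant. To show the advantage correction itself — not merely a large \(\beta\) — is responsible, I would also run a short two-state chain: let \(a_0\) be (nearly) never chosen by \(\pi\) in \(s_0\) and transition deterministically to a state \(s_1\) on which \(g\equiv 1\), while \(g(s_0,a_0)=1\) and \(g(s_0,a')=-1\) on the \(\pi\)-supported actions at \(s_0\); a direct computation gives \(\mathcal{L}_a g(s_0,a_0) = \gamma + 2\beta\,(1-\pi(a_0|s_0))\), which exceeds \(\|g\|_\infty = 1\) as soon as \(\gamma + 2\beta(1-\pi(a_0|s_0))>1\), e.g.\ for any \(\beta>\tfrac{1-\gamma}{2}\) with \(\pi(a_0|s_0)\) small.

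The main obstacle here is one of \emph{formulation} rather than calculation: the claim cannot hold for every \(\beta\) and every MDP simultaneously — on a one-state, two-action MDP one checks \(\|\mathcal{L}_a\|_{\mathrm{op}}=\max(\gamma,|\beta|)\), which is \(<1\) when \(|\beta|<1\), so there \(\mathcal{T}_a\) \emph{is} a contraction. The proof must therefore be explicit that it exhibits one concrete counterexample (MDP, policy, functions), valid for \(\beta\) in the relevant regime (any \(|\beta|\ge 1\), or any \(\beta>(1-\gamma)/2\) via the chain), which is precisely what motivates the subsequent step of ``defining \(\beta\) appropriately.'' The only genuinely delicate bookkeeping is verifying that the supremum defining \(\|\mathcal{L}_a g\|_\infty\) is attained at the engineered coordinate \((s_0,a_0)\) and is not cancelled at other coordinates by the interplay of the discounted term with the advantage term; once the affine reduction and the \(\pi\)-centered computation are in place, the remainder is routine.
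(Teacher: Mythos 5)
Your proposal is correct and takes a genuinely different, and considerably more rigorous, route than the paper. The paper's own argument simply expands $\left|\mathcal{T}_a u(s,a)-\mathcal{T}_a v(s,a)\right|$ and asserts that for ``certain $(u,v)$ and $\beta$'' it exceeds $\beta\,|u-v|$, hence $\gamma\,\|u-v\|_\infty$ since ``$\beta\approx\gamma$ at some point,'' concluding only that $\mathcal{T}_a$ is \emph{usually} not a contraction; no explicit MDP, policy, or pair of functions is ever produced. You instead exploit the affine structure $\mathcal{T}_a u-\mathcal{T}_a v=\mathcal{L}_a(u-v)$, reduce contractivity to a bound on $\|\mathcal{L}_a\|_{\mathrm{op}}$ in the sup-norm, and exhibit explicit eigendirections: $\pi$-centered functions give $\mathcal{L}_a g=\beta g$, constants give $\mathcal{L}_a g=\gamma g$, so $\|\mathcal{L}_a\|_{\mathrm{op}}\ge\max(\gamma,|\beta|)$, and your two-state chain sharpens this to $\gamma+2\beta\bigl(1-\pi(a_0|s_0)\bigr)$, identifying the regime $\beta>(1-\gamma)/2$ (with $\pi(a_0|s_0)$ small) in which every contraction constant is defeated. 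What this buys over the paper's sketch is twofold: an actual counterexample, and an honest delimitation of the claim --- you make explicit that the proposition cannot hold for all $\beta$ and all instances (it fails, e.g., for $\beta=0$), which the paper only signals through its hedging language; this also dovetails with the later conditions forcing $\beta_{i,j}\to 0$. Two minor remarks: the worry about cancellation at other coordinates is unnecessary, since the value at the single engineered coordinate $(s_0,a_0)$ already lower-bounds the sup-norm; and the side claim that a one-state, two-action MDP has $\|\mathcal{L}_a\|_{\mathrm{op}}=\max(\gamma,|\beta|)$ is true for the uniform policy but not for every $\pi$ --- with $\pi(a_0)$ small the same computation as in your chain yields roughly $2\beta-\gamma$ already in one state --- though this imprecision sits in an illustrative aside and does not affect your counterexamples or the conclusion.
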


\begin{proof}
	Let \( u(s,a) \equiv u \) and \( v(s,a) \equiv v \) be two elements of \( \mathcal{Q} \):
	\begin{align}
		\left|\mathcal{T}_a u(s,a) - \mathcal{T}_a v(s,a)\right| &= \left| \gamma \cdot \mathbb{E}_\mathbb{P}\left( \sum_{a'} \pi(a' | s') \left( u(s', a') - v(s', a') \right) \right) + \beta \cdot \left( \left[ u - v \right] - \sum_{a} \pi(a | s) \left[ u(s,a) - v(s,a) \right] \right) \right| \nonumber \\
		&> \beta \cdot \left| u(s,a) - v(s,a) \right| \quad \text{for certain \( (u, v) \) and \( \beta \)} \nonumber \\
		\Rightarrow ||\mathcal{T}_a u - \mathcal{T}_a v||_\infty &> \beta \cdot ||u - v||_\infty\nonumber \\
		\Rightarrow ||\mathcal{T}_a u - \mathcal{T}_a v||_\infty &> \gamma \cdot ||u - v||_\infty  \quad \text{for certain \( (u, v) \) and since \( \beta\approx\gamma \) at some point (see proposition \ref{PropBeta}))}.
	\end{align}
	This simplified proof shows that the operator \( \mathcal{T}_a \) is usually not a contraction mapping.
\end{proof}

Since \( \mathcal{T}_a \) is not a contraction, we cannot guarantee convergence. However, despite this, we can still examine the operator's behavior. Drawing from \cite{lu2018general} and \cite{bellemare2016increasing}, we introduce two key properties by which we will qualify a value-based operator as \textbf{well-behaving operator}: \textit{optimality preservation} and \textit{gap increasing}.\\
$_{}$\\
\textbf{Note}: In violation of our notation conventions, for simplification, below we are going to write value functions using capital letters, which will also help distinguishing operators written as indices.

\begin{defn}[Optimality Preservation]\label{OptimPreserv}
	Let \( \mathcal{T}_a \) be an alternative operator to the Bellman operator \( \mathcal{T}_b \). We say that \( \mathcal{T}_a \) preserves optimality if:
	\[
	Q_{k,\mathcal{T}_b} < V_{k,\mathcal{T}_b} \implies Q_{k,\mathcal{T}_a} < V_{k,\mathcal{T}_a} \quad \text{as } k \to \infty,
	\]
	where \( Q_{k,\mathcal{T}} \) represents the action-value function at iteration \( k \) using operator \( \mathcal{T} \), and \( V_{k,\mathcal{T}} \) is the associated state-value function.
\end{defn}

\begin{defn}[Gap Increasing]\label{GapIncreasing}
	Using the same notation as before, we say that an alternative operator \( \mathcal{T}_a \) induces the gap increasing property if, for every state \( s \in \mathcal{S} \) and each feasible action \( a \in \mathcal{A}(s) \):
	\[
	\left| \lim_{k \to \infty} \left( Q_{k,\mathcal{T}_b} - V_{k,\mathcal{T}_b} \right) \right| \leq \left| \lim_{k \to \infty} \left( Q_{k,\mathcal{T}_a} - V_{k,\mathcal{T}_a} \right) \right|.
	\]
\end{defn}

\noindent
The \textbf{optimality preservation} property indicates how well the operator preserves the search for the optimal fixed point, while the \textbf{gap increasing} property reflects the ability of the operator to distinguish between the values of suboptimal and optimal actions.

\begin{pro}
	We claim that the operator defined in Equation \ref{AdvantageOperator} is a well-behaving operator, even though it is not a contraction.
\end{pro}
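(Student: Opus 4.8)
# Proof Proposal

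The plan is to verify the two defining properties of a well-behaving operator separately, namely optimality preservation (Definition \ref{OptimPreserv}) and the gap increasing property (Definition \ref{GapIncreasing}), by tracking how the advantage term $\beta\left[f(s,a) - \sum_a \pi(a|s)f(s,a)\right]$ interacts with the iterates. First I would fix notation: write the iterates as $Q_{k,\mathcal{T}_a}$ and $V_{k,\mathcal{T}_a}(s) = \sum_a \pi(a|s)Q_{k,\mathcal{T}_a}(s,a)$, so that the advantage term at step $k$ is exactly $A_k(s,a) = Q_{k,\mathcal{T}_a}(s,a) - V_{k,\mathcal{T}_a}(s)$. The key observation is that $\mathcal{T}_a$ decomposes as $\mathcal{T}_a f = \mathcal{T}^\pi_Q f + \beta\,A_f$, where $\mathcal{T}^\pi_Q$ is the Bellman expectation operator of Equation \ref{eq:BellExpeOperQValues} (a $\gamma$-contraction with fixed point $q_\pi$, by Proposition \ref{ProofExpectAct-Val}), and $A_f$ is the advantage of $f$. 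So the update is a contraction step plus an explicit correction that amplifies the current gap.

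For optimality preservation, I would argue that applying $\mathcal{T}^\pi_Q$ drives the ``Bellman part'' of the iterate toward $q_\pi$, so that asymptotically $Q_{k,\mathcal{T}_a}(s,a) - V_{k,\mathcal{T}_a}(s) \to q_\pi(s,a) - v_\pi(s) = A_\pi(s,a)$ up to the scaling introduced by $\beta$; the hypothesis $Q_{k,\mathcal{T}_b} < V_{k,\mathcal{T}_b}$ means $A_\pi(s,a) < 0$, i.e. the action $a$ is suboptimal under $\pi$, and since the $\beta$-correction multiplies a negative quantity by a nonnegative factor it cannot flip the sign, so $Q_{k,\mathcal{T}_a} < V_{k,\mathcal{T}_a}$ persists. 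For the gap increasing property, I would compute $Q_{k+1,\mathcal{T}_a} - V_{k+1,\mathcal{T}_a}$ directly: since the reward term $r(s,a)$ and the transition-expectation term are identical across actions and get averaged out when forming $V_{k+1}$, the gap at step $k+1$ equals the gap produced by the plain Bellman operator $\mathcal{T}_b$ plus $\beta$ times $A_k(s,a)$. Taking $k\to\infty$ and using that both gaps converge to (scalar multiples of) the same advantage $A_\pi(s,a)$, the extra $\beta$-term gives $\left|\lim_k(Q_{k,\mathcal{T}_a}-V_{k,\mathcal{T}_a})\right| = (1+\beta)\left|\lim_k(Q_{k,\mathcal{T}_b}-V_{k,\mathcal{T}_b})\right| \geq \left|\lim_k(Q_{k,\mathcal{T}_b}-V_{k,\mathcal{T}_b})\right|$, provided $\beta \geq 0$, which will be the constraint pinned down in Proposition \ref{PropBeta}.

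The main obstacle is that, because $\mathcal{T}_a$ is not a contraction, the limits $\lim_{k\to\infty} Q_{k,\mathcal{T}_a}$ appearing in both definitions are not guaranteed to exist a priori, so the argument must either (i) restrict $\beta$ to a range for which the composite operator, while not a contraction in $\|\cdot\|_\infty$, still has convergent orbits — e.g. by exhibiting an invariant affine subspace on which the advantage term vanishes and a complementary direction that is controlled — or (ii) interpret the limits in the spirit of \cite{bellemare2016increasing} as pointwise limits that exist along the specific iteration, and verify the inequalities on that sequence without claiming full convergence. I would pursue (i): decompose any $f$ into its $\pi$-average part $\bar f(s) = \sum_a\pi(a|s)f(s,a)$ and its advantage part; show $\mathcal{T}_a$ acts on the average part exactly as the state-value Bellman expectation operator (hence contracts it to $v_\pi$), and acts on the advantage part by a map whose spectral behaviour is governed by $\gamma$ and $\beta$, so that choosing $\beta$ as in Proposition \ref{PropBeta} keeps the advantage component bounded and convergent. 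Once convergence of the relevant components is secured, both Definitions \ref{OptimPreserv} and \ref{GapIncreasing} follow from the sign and magnitude bookkeeping sketched above.
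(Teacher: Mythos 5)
Your structural observation is genuinely nicer than what the paper does: writing $\mathcal{T}_a f = \mathcal{T}^\pi_Q f + \beta A_f$ and noting that the advantage term averages to zero under $\pi$, so that $V_{k+1,\mathcal{T}_a}(s)=\sum_a\pi(a|s)\bigl(\mathcal{T}^\pi_Q Q_k\bigr)(s,a)$ and the gap obeys the clean recursion ``new gap $=$ gap of $\mathcal{T}^\pi_Q Q_k$ plus $\beta$ times the old gap''. The paper never isolates this; it instead works directly with the hypothesis $Q_{k,\mathcal{T}_b}<V_{k,\mathcal{T}_b}$, adds $\beta K$ with $K=\min_a\bigl(Q_{k-1}(s,a)-V_{k-1}(s)\bigr)$ to both sides, and then upgrades $K$ to the actual advantage using the informal assumption that the policy improves as $k\to\infty$, identifying the right-hand side as $V_{k,\mathcal{T}_a}$.

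However, two steps in your sketch do not hold as stated. First, the claimed identity $\bigl|\lim_k(Q_{k,\mathcal{T}_a}-V_{k,\mathcal{T}_a})\bigr|=(1+\beta)\bigl|\lim_k(Q_{k,\mathcal{T}_b}-V_{k,\mathcal{T}_b})\bigr|$ does not follow from your own recursion: the ``Bellman gap'' appearing at step $k+1$ is that of $\mathcal{T}^\pi_Q$ applied to the $\mathcal{T}_a$-iterate $Q_k$, not the gap along the separate $\mathcal{T}_b$-iteration, and the $\beta$-correction compounds over iterations; at a fixed point of $\mathcal{T}_a$ with constant $\beta$ the bookkeeping gives $(1-\beta)\,A^a(s,a)=$ gap of $\mathcal{T}^\pi_Q Q^a$, i.e.\ a factor $\tfrac{1}{1-\beta}$, while with the decaying $\beta_{i,j}\to 0$ actually imposed later the two limits coincide rather than differ by a factor $(1+\beta)$. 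The qualitative conclusion (magnitude at least as large when $\beta\ge 0$) survives, and indeed the paper only needs the trivial bound $0\le|\cdot|$ for optimal actions plus the preserved negative sign for suboptimal ones, so the quantitative claim is both wrong and unnecessary. Second, the convergence issue you flag is never resolved: option (i) (spectral control of the advantage direction) is left as a plan, and Proposition \ref{PropBeta} does not ``pin down $\beta\ge 0$'' --- it is a boundedness/continuity statement, the actual conditions being summability of $\beta_{i,j}$ and $\beta_{i,j}\to 0$. The paper does not prove convergence either (it says so explicitly), but its optimality-preservation argument runs on pre-limit inequalities, whereas your sign argument is purely asymptotic and therefore leans on exactly the convergence you have not established; as written, both halves of your proof rest on that unproved assumption.
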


\begin{proof}
	We prove these properties in sequence:
	\begin{enumerate}
		\item \textbf{Optimality Preservation}: From Definition \ref{OptimPreserv}, we assume that \( Q_{k,\mathcal{T}_b} < V_{k,\mathcal{T}_b} \), and aim to show \( Q_{k,\mathcal{T}_a} < V_{k,\mathcal{T}_a} \) as \( k \to \infty \). Note that for better flow, when applying the Bellman operator to a quantity, we will sometimes write \( \text{Bell}(Q_{k-1}) \) instead of \( \mathcal{T}_b Q_{k-1} \). We start with:
		{\begin{align}
			Q_{k,\mathcal{T}_b}< V_{k,\mathcal{T}_b}& \Rightarrow \mathcal{T}_b Q_{k-1}< V_{k,\mathcal{T}_b} \nonumber \\
			&\Rightarrow Bell(Q_{k-1}) < \sum_{a} \pi(a|s) Bell(Q_{k-1}). \nonumber
		\end{align}
		Without loss of generality, let \( K = \min\limits_a\, \Big(Q_{k-1}(s,a) - V_{k-1}(s)\Big) \). After including the advantage learning term, we find:
		\begin{align}
			\Rightarrow Bell\left(Q_{k-1}\right) + \beta\cdot K &< \sum_{a}\pi(a|s)\cdot Bell\left(Q_{k-1}\right)+\beta\cdot K\cdot1\nonumber\\
			\Rightarrow Bell\left(Q_{k-1}\right) + \beta\cdot K &< \sum_{a}\pi(a|s)\cdot Bell\left(Q_{k-1}\right)+\beta\cdot\sum_{a}\pi(a|s)\cdot K\nonumber	
		\end{align}}
        also knowing that $K\leq\sum_{a}\pi(a|s)\cdot\left(Q_{k-1}(s,a)-V_{k-1}(s)\right) ~~\forall~ (s,a)  \in \mathcal{S}\times\mathcal{A}$, and since we are getting better policy as $k\to\infty$, we have almost surely that $\left(Q_{k-1}-V_{k-1}\right)\leq \sum_{a}\pi(a|s)\cdot\left(Q_{k-1}-V_{k-1}\right)$. Therefore:
            \begin{align}
                Bell\left(Q_{k-1}\right) + \beta\cdot K &< \sum_{a}\pi(a|s)\cdot Bell\left(Q_{k-1}\right)+\beta\cdot\sum_{a}\pi(a|s)\cdot K\nonumber\\
			\Rightarrow Bell\left(Q_{k-1}\right) + \beta\cdot \left(Q_{k-1}-V_{k-1}\right) &< \sum_{a}\pi(a|s)\cdot Bell\left(Q_{k-1}\right)+\beta\cdot\sum_{a}\pi(a|s)\cdot \left(Q_{k-1}-V_{k-1}\right) ~~~~~\text{as $k\to\infty$}\nonumber\\
			\Rightarrow \underset{Q_{k,\mathcal{T}_a}}{\underbrace{Bell\left(Q_{k-1}\right) + \beta\cdot \left(Q_{k-1}-V_{k-1}\right)}} &< \sum_{a}\pi(a|s)\cdot\Big[ Bell\left(Q_{k-1}\right)+\beta\cdot \left(Q_{k-1}-V_{k-1}\right)\Big]\nonumber\\
			\Rightarrow Q_{k,\mathcal{T}_a} &< \sum_{a}\pi(a|s)\cdot Q_{k,\mathcal{T}_a}(s,a)\nonumber\\	
			\Rightarrow Q_{k,\mathcal{T}_a} & < V_{k,\mathcal{T}_a}	~~~~~~~~~\text{as $k\to\infty$}					
		\end{align}
		\item \textbf{Gap Increasing}: Using Definition \ref{GapIncreasing}, we need to show that:
		\[
		\left| \lim_{k \to \infty} \left( Q_{k,\mathcal{T}_b} - V_{k,\mathcal{T}_b} \right) \right| \leq \left| \lim_{k \to \infty} \left( Q_{k,\mathcal{T}_a} - V_{k,\mathcal{T}_a} \right) \right|.
		\]
        Recall that for optimal actions,
		\[\lim\limits_{k\to\infty} \left[Q_{k,\mathcal{T}_b} - V_{k,\mathcal{T}_b}\right] = 0,\]
		Therefore, using $\mathcal{T}_a$, we can assert, for optimal actions, that:
		\[\left|\lim\limits_{k\to\infty}\left[Q_{k,\mathcal{T}_b} - V_{k,\mathcal{T}_b}\right]\right| = 0 ~~~~\leq~~~~ \left|\lim\limits_{k\to\infty} \left[Q_{k,\mathcal{T}_a} - V_{k,\mathcal{T}_a}\right]\right|\]
  For all other actions, since convergence implies \(\,Q_{k-1}-V_{k-1}<0\) and optimality is preserved (previous proof), the same inequality holds. Therefore,
  \[
		\left| \lim_{k \to \infty} \left( Q_{k,\mathcal{T}_b} - V_{k,\mathcal{T}_b} \right) \right| \leq \left| \lim_{k \to \infty} \left( Q_{k,\mathcal{T}_a} - V_{k,\mathcal{T}_a} \right) \right|~~~~~~~\forall (s,a)\in \mathcal{S}\times\mathcal{A}.
		\]
		Thus, for any state \( s \) and action \( a \), the operator \( \mathcal{T}_a \) maintains or increases the gap compared to \( \mathcal{T}_b \), satisfying the gap increasing property as we defined it.
	\end{enumerate}
\end{proof}

\noindent
The operator defined in Equation \ref{AdvantageOperator} is a \textbf{well-behaving operator}, even though we assume convergence under specific conditions for \( \beta \).

\vspace{0.5cm}

\noindent
Now, we discuss the possibility of finding a fixed point for this operator. Before doing so, we will first examine the continuity and boundedness of the operator, as these are prerequisites for analyzing fixed points.

\begin{pro}\label{PropBeta}
	Let the reward function \( r(s,a) \) be bounded and continuous. The operator \( \mathcal{T}_a \), as defined in Equation \ref{AdvantageOperator}, is also \textbf{bounded} and \textbf{continuous}.
\end{pro}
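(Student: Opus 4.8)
The plan is to treat $\mathcal{T}_a$ as an affine map on the Banach space $(\mathcal{Q}, \|\cdot\|_\infty)$ and to verify the two assertions separately, exploiting the fact that every averaging term appearing in Equation~\ref{AdvantageOperator} is a convex combination and hence norm-nonexpansive. Throughout, $\|r\|_\infty < \infty$ is the hypothesis on the reward, and $p(\cdot|s,a)$ and $\pi(\cdot|s)$ are probability distributions, which also guarantees absolute convergence of the sums when $\mathcal{S}$ or $\mathcal{A}$ is infinite.

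First, for boundedness I would fix $f \in \mathcal{Q}$ and estimate $(\mathcal{T}_a f)(s,a)$ pointwise. The reward term contributes at most $\|r\|_\infty$. The term $\gamma \sum_{s'} p(s'|s,a)\sum_{a'}\pi(a'|s') f(s',a')$ is $\gamma$ times a double convex combination of values of $f$, so its absolute value is at most $\gamma\|f\|_\infty$. The advantage term $\beta\big[f(s,a) - \sum_{a}\pi(a|s) f(s,a)\big]$ is bounded in absolute value by $|\beta|\big(\|f\|_\infty + \|f\|_\infty\big) = 2|\beta|\,\|f\|_\infty$. Adding these, $\|\mathcal{T}_a f\|_\infty \le \|r\|_\infty + (\gamma + 2|\beta|)\|f\|_\infty < \infty$, so $\mathcal{T}_a f \in \mathcal{Q}$ and $\mathcal{T}_a$ is a well-defined self-map of $\mathcal{Q}$ that sends bounded sets to bounded sets.

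Second, for continuity I would take $u, v \in \mathcal{Q}$ and observe that in the difference $\mathcal{T}_a u - \mathcal{T}_a v$ the reward term cancels, leaving the $\gamma$-averaging term and the advantage term, each applied to $u - v$. Reusing the nonexpansiveness of the convex combinations exactly as above yields
\[
\|\mathcal{T}_a u - \mathcal{T}_a v\|_\infty \le (\gamma + 2|\beta|)\,\|u - v\|_\infty,
\]
so $\mathcal{T}_a$ is Lipschitz on $(\mathcal{Q}, \|\cdot\|_\infty)$ and therefore uniformly continuous. Since $\beta$ will eventually be chosen near $\gamma$ (see Proposition~\ref{PropBeta}), the Lipschitz constant $\gamma + 2|\beta|$ typically exceeds $1$, which is exactly consistent with the earlier observation that $\mathcal{T}_a$ fails to be a contraction; continuity, not contractivity, is all that is needed at this stage.

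I do not anticipate a serious obstacle. The only point requiring care is the interpretation of ``bounded'' for an affine (rather than linear) operator: I read it as ``$\mathcal{T}_a(\mathcal{Q}) \subseteq \mathcal{Q}$ and $\mathcal{T}_a$ is bounded on bounded sets,'' and one must keep track of the factor $2|\beta|$ contributed by the advantage correction. If instead one wants pointwise continuity of $\mathcal{T}_a f$ as a function of $(s,a)$, then one additionally needs the kernel $p(\cdot|s,a)$ and the policy $\pi(\cdot|s)$ to depend continuously on $(s,a)$; the argument is then the routine one combining continuity of $r$ with a dominated-convergence passage to the limit, and I would record it as a remark rather than fold it into the main estimate.
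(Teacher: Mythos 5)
Your proof is correct, and it is in fact tighter than the paper's own argument. The paper proceeds by decomposing $\mathcal{T}_a$ into three parts (the reward, the discounted expectation term, and the advantage term), asserting each is bounded and continuous --- the middle part because it is a $\gamma$-contraction, hence a bounded (linear) and therefore continuous operator --- and then summing, with the vague caveat that ``$\beta$ has appropriate properties.'' You instead treat $\mathcal{T}_a$ as a single affine self-map of $(\mathcal{Q},\|\cdot\|_\infty)$ and produce explicit constants: $\|\mathcal{T}_a f\|_\infty \le \|r\|_\infty + (\gamma + 2|\beta|)\,\|f\|_\infty$ for boundedness, and the Lipschitz bound $\|\mathcal{T}_a u - \mathcal{T}_a v\|_\infty \le (\gamma + 2|\beta|)\,\|u-v\|_\infty$ for continuity. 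What your route buys is that both conclusions hold for \emph{any} fixed real $\beta$ with no hidden convergence condition, and the Lipschitz constant exceeding $1$ is cleanly consistent with the earlier proposition that $\mathcal{T}_a$ is not a contraction; the paper's part-by-part decomposition adds nothing beyond this except an explicit verification of linearity of the middle term in its extended version. You are also right to isolate the ambiguity in ``continuous'': the paper's treatment of Parts 1 and 3 tacitly reads it as continuity of $(s,a)\mapsto(\mathcal{T}_a f)(s,a)$, which indeed requires continuity of $p(\cdot|s,a)$ and $\pi(\cdot|s)$ in $(s,a)$ together with a dominated-convergence (or finite-sum) passage --- hypotheses the paper never states; recording that as a separate remark, as you propose, is the cleaner resolution.
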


\begin{proof}
	We can break the operator into three components:
	\[
	\left(\mathcal{T}_a f\right)(s,a) = \underbrace{r(s,a)}_{\text{Part 1}} + \underbrace{\gamma \cdot \sum_{s'} p(s' | s, a) \left( \sum_{a'} \pi(a' | s') \cdot f(s', a') \right)}_{\text{Part 2}} + \underbrace{\beta \cdot \left[ f(s,a) - \sum_{a} \pi(a | s) f(s,a) \right]}_{\text{Part 3}}.
	\]
	\begin{enumerate}
		\item \textbf{Part 1}: The reward function is bounded and continuous by assumption.
		\item \textbf{Part 2}: This part is a contraction mapping, hence it is both bounded and continuous.
		\item \textbf{Part 3}: If \( f \) is bounded and continuous, then the term \( f(s,a) - \sum_{a} \pi(a | s) f(s,a) \) is also bounded and continuous, provided \( \beta \) has appropriate properties.
	\end{enumerate}
	Thus, \( \mathcal{T}_a \) is continuous and bounded, assuming the behavior of \( \beta \) ensures convergence.
\end{proof}

Given the boundedness and continuity of \( \mathcal{T}_a \), we expect that the operator will not diverge if \( \beta \) is chosen carefully. The choice of \( \beta \) must balance between improving speed and optimality while maintaining proximity to the classical Bellman Operator. 
We therefore propose conditions for \( \beta \) to ensure convergence in a family of operators based on \( \mathcal{T}_a \), where \( \beta \) varies across iterations \( j \) with \( i \) refering to the index of a specific operator within this family. For convergence, the sequence \( \beta_{i,j} \) must satisfy the following two conditions:

{\begin{equation}
    \sum_{j=1}^{\infty} \beta_{i,j} < \infty\,,  \text{and } \{\beta_{i,j}\}\to 0, \text{ as } j \to \infty.
\end{equation}}

The first condition ensures that the total sum of \( \beta_{i,j} \) across iterations is finite, and the second condition guarantees that \( \beta_{i,j} \) approaches zero as the iteration index \( j \) grows. Together, these conditions allow for the construction of a sequence of operators that converges to the classical Bellman operator.

\section{Implementations and analysis}\label{section5}
To prove the effectiveness of the suggested \textit{Modified Robust Stochastic Operator}, we have conducted experiments on three groups of classical problems in reinforcement learning using \textbf{OpenAI Gymnasium environments}. The illustrative experiments were conducted using Q-Learning, implemented with inspiration from \cite{GithubVmayoral}.
Our own implementation is available in the GitHub repository \cite{MyGithub}.
Interested readers can see all the parameters at the beginning of the code for reproducibility.

\vspace{0.5cm}

\noindent
Bellow  are the necessary details on the environments we used, along with a succinct presentation of the main results we obtained. The goal is to show that refining the operators isn't just a theoretical curiosity but can lead to actual improvements in the algorithm's performance. We should also mention that, although it is not part of this paper, we also considered the function-approximation case (using DQN), which provided significant improvements as well; however, that investigation should be part of an extensive study.

\subsection{Environments and results}
\begin{itemize}
	\item[1.] \textbf{Mountain Car environment:} The theory about this environment is presented in \cite{moore1990efficient}. The state vector is 2-dimensional, continuous with a total of three possible actions. As long as the goal is not yet reached, depending on the action, a negative reward is given to the agent until it reaches the goal. Following \cite{lu2018general}, we have discretized the state space into a \( 40\times 40 \) grid, but differently, we did 10,000 training steps, with 10,000 episodes each.\\
    The following Figure \ref{fig:mountaincar} shows the average total reward across episodes, and we can see that our modified version of the Robust Stochastic Bellman Operator achieves better performance. We will provide a comparative analysis of all the results together in the next section.
    \begin{center}
        \includegraphics[width=0.8\linewidth]{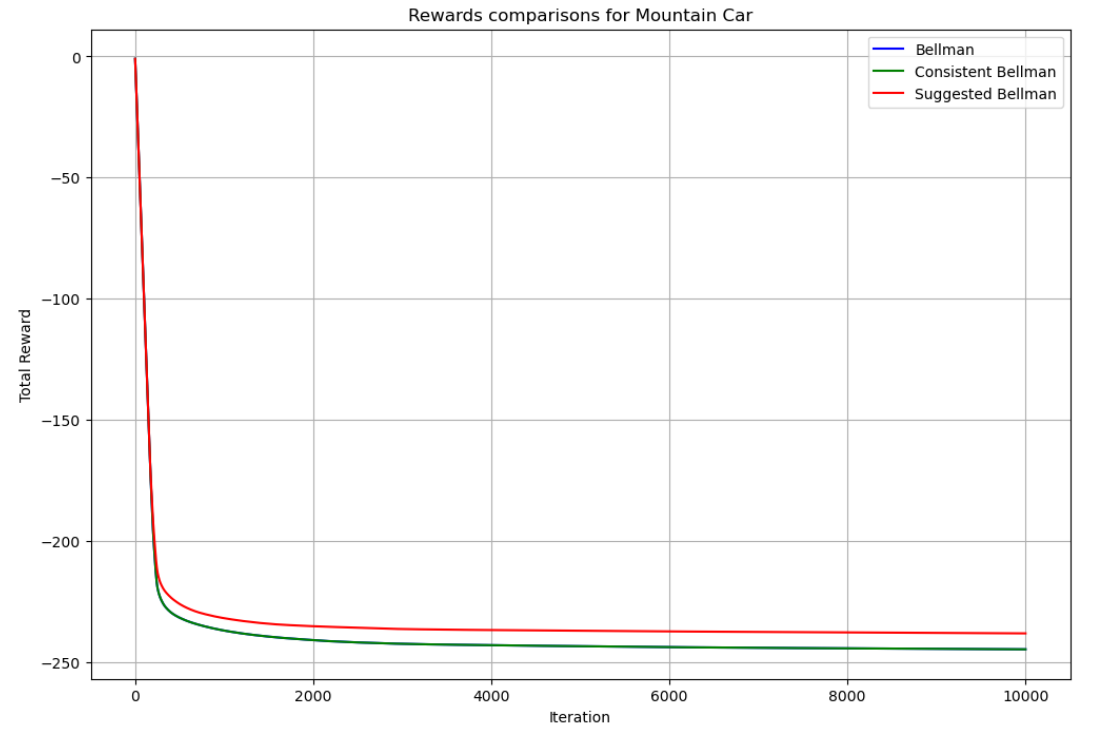}
        \captionof{figure}{Convergence comparison in the MountainCar environment.}
        \label{fig:mountaincar}
    \end{center}

	\item[2.] \textbf{Cart Pole environment:} The theory about Cart Pole is presented in \cite{barto1983neuronlike}. For this environment, the state vector is 4-dimensional and continuous with a total of two possible actions. The aim is to keep the pole upright for as long as possible, with a reward of \( +1 \) for each step up to the failure, including the final step. So, the reward is positive at the end.\\
    During our experiments, we have discretized the state space into a \( 150\times 150\times 150 \times 150 \) grid and again we did 10,000 training steps, with 10,000 episodes each. The following Figure \ref{fig:cartpole} shows the average total rewards across episodes. And again, as predicted, there is a significant improvement when using the modified operator.
    \begin{center}
        \includegraphics[width=0.8\linewidth]{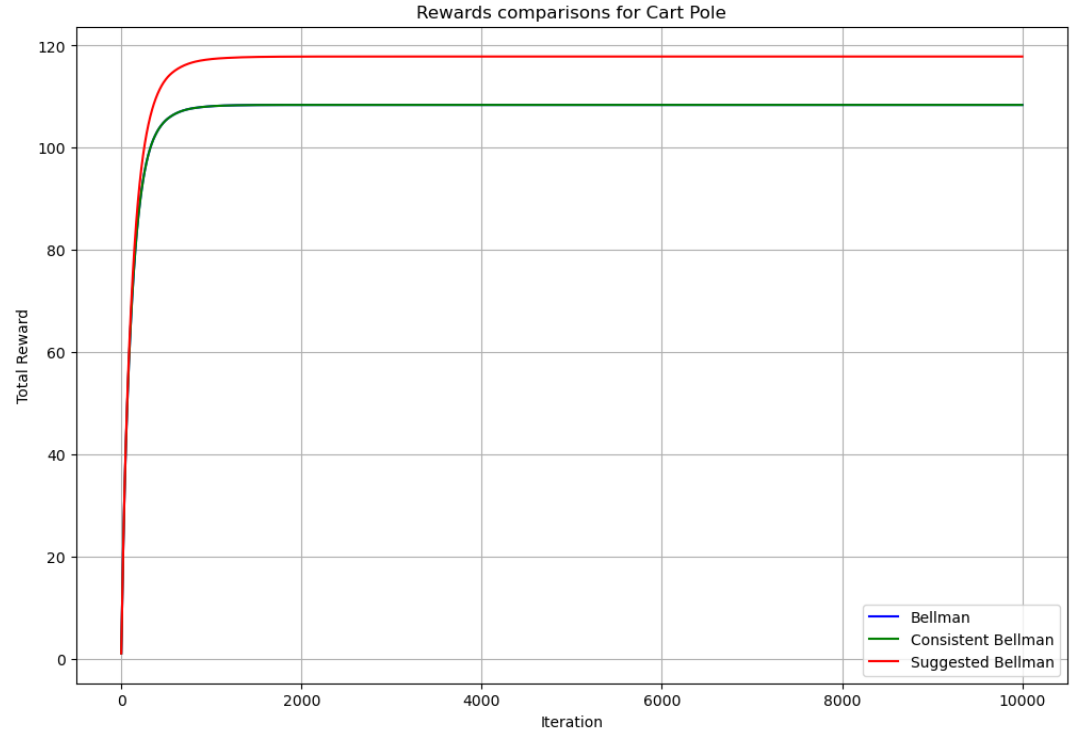}
        \captionof{figure}{Performance in the CartPole environment.}
        \label{fig:cartpole}
    \end{center}

	\item[3.] \textbf{Acrobot environment:} The theory of the Acrobot environment and all its specifications is presented in \cite{sutton1995generalization}. The state vector is 6-dimensional and continuous with a total of three possible actions. The goal in this environment is to have the free end of the Acrobot reach the target height (represented by a horizontal line) in as few steps as possible, with each step not reaching the target being rewarded with -1. So, the reward is negative again as for Mountain Car.
    During our experiments, we have discretized the state space into a \( 30\times 30\times 30 \times 30\times 30 \times 30 \) grid, due to the limitations in memory allocation of the computer we were using, and again we did 10,000 training steps, with 10,000 episodes each. The following Figure \ref{fig:acrobot} shows the averages across episodes.
    \begin{center}
        \includegraphics[width=0.8\linewidth]{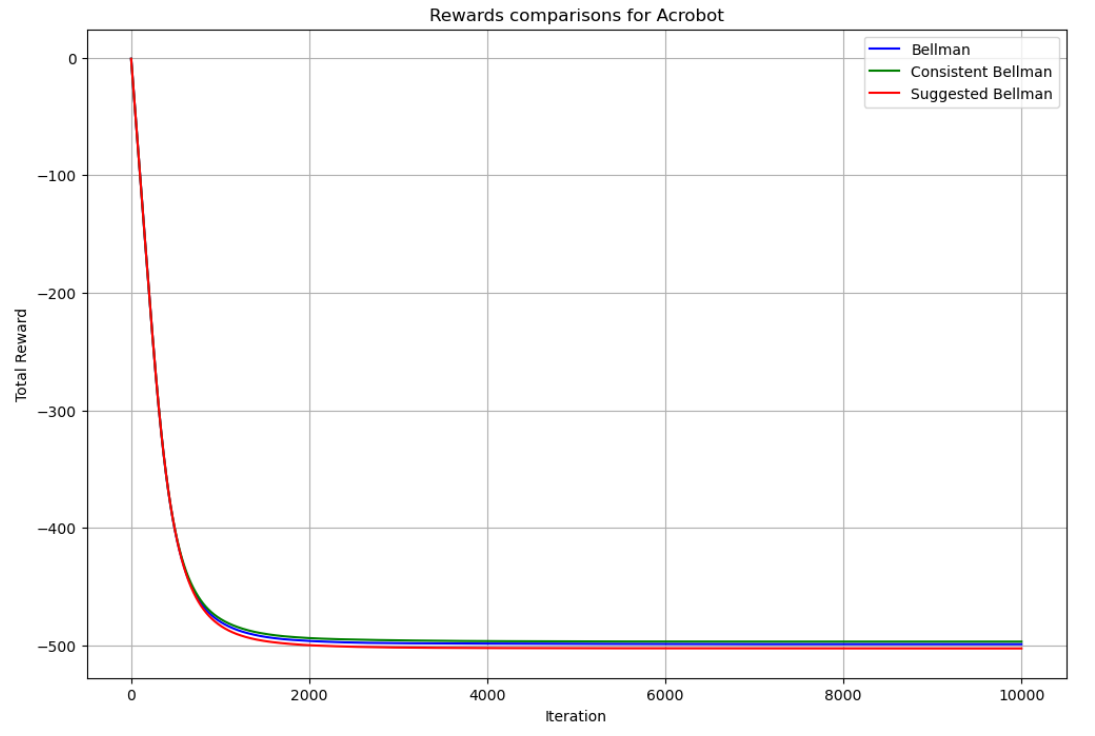}
        \captionof{figure}{Comparison of learning curves in the Acrobot environment.}
        \label{fig:acrobot}
    \end{center}
\end{itemize}

\subsection{Interpretation}
\begin{itemize}
	\item[1.] \textbf{Mountain Car:} In the Mountain Car task, the cumulative reward is negative and driven by the number of time-steps the agent takes to reach the goal. In Figure \ref{fig:mountaincar}, the red curve (Modified Robust Stochastic Bellman Operator) clearly converges to a higher (i.e.,less negative) average reward than both the classical Bellman operator (blue) and the Consistent Bellman operator (green), which here are essentially indistinguishable. This indicates that our modification not only accelerates convergence but also reduces the time penalty more effectively, allowing the agent to reach the goal in fewer steps on average.

	\item[2.] \textbf{Cart Pole:} For Cart Pole, the objective is to maximize the number of steps before failure, so higher rewards correspond to longer pole-balancing episodes. As shown in Figure \ref{fig:cartpole}, our Modified Robust Bellman operator (red) again outperforms the other two operators by reaching a superior reward plateau (around 120) significantly faster. The gap between the red curve and the overlapping blue/green curves illustrates a substantial improvement in sample efficiency and performances: the modified operator requires fewer iterations to achieve near-optimal performance.

	\item[3.] \textbf{Acrobot:} The Acrobot problem is more challenging due to its higher dimensionality and sparser reward structure. In Figure \ref{fig:acrobot}, all three curves-classical, consistent, and Modified Robust converge to nearly the same average reward. We believe this is a consequence of the relatively coarse state discretization we were forced to use (owing to memory constraints), which limits the expressiveness of our value estimates. While our modified operator does not yet show a clear advantage under these discretization settings, it nevertheless matches the performance of the baseline methods. A finer discretization or a function-approximation approach (e.g.,DQN) may reveal the full potential of the modified operator in future work. So, this experiment needs further investigations to establish clearly what is the great attainable difference between those operators.
\end{itemize}

\vspace{0.5cm}

\noindent
So, generally the Consistent Bellman Operator yields results virtually identical to those of the classical operator, whereas our Modified Robust Stochastic Operator consistently outperforms both. These illustrative experiments underscore that the theoretical refinements we developed (namely, the incorporation of advantage learning and a carefully decayed constant during training, in accordance with the conditions we proved) have a clear, practical impact on algorithm performance. While deriving RL methods from the classical Bellman framework remains a sound approach, our findings demonstrate that even slight, theory-guided modifications can unlock significant gains, pointing the way toward more effective operator and algorithm designs in future work.
\subsection{Further Discussions About the Statistical Methodology}
For each environment, we conducted \textbf{10 000 independent training runs}. Each run comprised up to 10 000 timesteps, with episodes capped at 10 000 steps or terminated immediately upon the agent reaching its goal. Within each run, we maintained a running total of the discounted reward after every action. If an episode terminated early, we padded the remaining timesteps with the last recorded total so that every episode yielded a fixed-length sequence. After completing all episodes, we averaged these per-step totals across the 10 000 runs to produce a single, smooth learning curve (see figures \ref{fig:mountaincar}, \ref{fig:cartpole} and \ref{fig:acrobot} above). This Monte Carlo Sampling approach aggregation treats each run as an independent trial and provides an unbiased, low-variance estimate of the agent's expected return over time. We emphasize that this classical approach (common in RL benchmarks and often involving far fewer runs, typically 30 to 100) is statistically rigorous when scaled to 10 000 trials, as it substantially reduces estimator variance.\\
\\
If further statistical analysis is of interest, this remains an appealing avenue to explore. However, our primary goal was to demonstrate that the mathematical analysis we conducted warrants serious consideration by both practitioners and theoreticians in the field.
\section{Conclusion}

This work delves into the topological foundations of Reinforcement Learning, providing a rigorous mathematical framework for studying its core concepts. By focusing on the relationship between Topology and Reinforcement Learning principles, we aim to pave the way for breakthroughs that can facilitate the contributions of mathematicians toward improving Reinforcement Learning algorithms.  We discussed the Consistent Bellman Operator as an alternative to the classical Bellman Operator, demonstrating that it retains critical properties, such as the uniqueness of the optimal fixed point. However, this raised important questions about the relationship between the fixed points of these operators and how they might affect algorithm performance in practice. Additionally, we introduced a deterministic variation of the Robust Stochastic Operator, as well as a refinement and simplification of what can be considered a well-behaving operator, highlighting that stochasticity is not essential for achieving superior results compared to the classical Bellman Operator. Our implementation, using Python and OpenAI Gymnasium environments, showed that the proposed operator outperforms classical methods across a variety of tasks, validating its practical effectiveness. Looking ahead, future research could expand upon our approach by further investigating the state space, action space, and policy space, as well as optimizing the efficiency of the proposed operator. In particular, the large-scale Monte Carlo sampling (10 000 independent trials) provides a low-variance, unbiased estimate of agent performance, however more detailed statistical analyses (e.g., confidence intervals, hypothesis tests, percentile metrics) could enrich the analysis by better capturing variability and enabling further comparisons. Ultimately, our goal has been to highlight both the method and its mathematical foundation as deserving of close attention, encouraging both practitioners and theoreticians to build on this framework. We conjecture that any monotonic contraction mapping incorporating a notion of policy could serve as a viable operator in Reinforcement Learning, provided that the definitions of the associated value functions are adequately refined for this context.\\
This work aims to empower researchers (even those who may not be familiar with the technicalities of the field) to engage with Reinforcement Learning through a mathematically rigorous lens, making the fundamental concepts more accessible and understandable, while also illustrating how this perspective can be leveraged to develop efficient algorithms and further investigate the theoretical foundations.

\bibliographystyle{plain}
\bibliography{main}

\end{document}